%% file: ms.tex
\documentclass[11pt]{article} 

\usepackage{graphicx}
\usepackage{booktabs}
\usepackage{algorithm}
\usepackage{algpseudocode}
\usepackage[T1]{fontenc}
\usepackage{lmodern}
\usepackage[flushleft]{threeparttable}
\usepackage[hidelinks]{hyperref} %
\hypersetup{hidelinks}
\usepackage{multirow}
\usepackage{longtable}
\usepackage[margin=3cm]{geometry} %
\usepackage{amsmath,amsthm,amssymb}
\usepackage{authblk}
\usepackage{caption}
\usepackage{microtype}

\newtheorem{theorem}{Theorem}
\newtheorem{lemma}{Lemma}

\DeclareMathOperator*{\argmax}{arg\,max}
\DeclareMathOperator{\cossim}{cossim}

\title{Using predefined vector systems to speed up neural network
multimillion class classification}
\author[1]{Nikita Gabdullin}
\author[1]{Ilya Androsov}
\affil[1]{Joint Stock "Research and production company "Kryptonite" \authorcr
E-mail: n.gabdullin@kryptonite.ru, i.androsov@kryptonite.ru}
\date{}
\begin{document}

    \captionsetup[table]{labelformat={default},labelsep=period,name={Table}}

    \maketitle

    \begin{abstract}
        Label prediction in neural networks (NNs) has
        $\mathcal{O}(n)$ complexity proportional to the number of classes. This holds
        true for classification using fully connected layers and cosine
        similarity with some set of class prototypes. In this paper we show that
        if NN latent space (LS) geometry is known and possesses specific
        properties, label prediction complexity can be significantly reduced.
        This is achieved by associating label prediction with the $\mathcal{O}(1)$
        complexity closest cluster center search in a vector system used as
        target for latent space configuration (LSC). The proposed method only
        requires finding indexes of several largest and lowest values in the
        embedding vector making it extremely computationally efficient. We show
        that the proposed method does not change NN training accuracy
        computational results. We also measure the time required by different
        computational stages of NN inference and label prediction on multiple
        datasets. The experiments show that the proposed method allows to
        achieve up to 11.6 times overall acceleration over conventional methods.
        Furthermore, the proposed method has unique properties which allow to
        predict the existence of new classes.
  
    \end{abstract}

    \emph{Keywords}: Neural network classification, fast label prediction,
        prototype similarity search, vector systems, latent space configuration. 
    
    \input{full}

\end{document}

%% file: full.tex
\section{Introduction}
\label{introduction}

Conventional neural networks (NNs) used in computer vision (CV) tasks
are trained to distinguish multitudes of specific objects or their
categories. Furthermore, it is becoming essential for NNs to be able to
correctly identify different object categories (like animals or
vehicles) and their subcategories (like cat breeds or vehicle types) at
the same time. This results in extremely large number of classes in
classification tasks, which can reach tens and hundreds of millions.
This topic is relevant for various application areas which include
retail, image and video analysis systems, recommendation systems, etc.

Class or label prediction is achieved by performing some operation on NN
outputs called representations or embeddings, with the space of all
possible embeddings referred to as latent space (LS). Typically, NNs
learn representations of input data with no external guidance with
respect to the resulting embedding cluster distribution in LS and the
overall LS geometric structure. However, in recent years many researches
have been focusing on identifying LS properties desired for high NN
performance and ways to ensure that they are obtained during 
training~\cite{CosFace,CenterLoss}. It has recently been shown that one can use
predefined system of vectors to train NNs with a method called latent
space configuration (LSC) to obtain LS with known structure and the
desired set of properties~\cite{lsc2}.

It was previously noted that the knowledge of the structure of vector
system used for LSC allows to achieve $\mathcal{O}(1)$ complexity for closest
center vector search operation, which is necessary to predict labels~\cite{Igor}.
In this paper we experimentally verify this for specific
family of configurations described in Section~\ref{Vn}. This allows to
achieve the astounding (up to 11.6 times in our experiments) speed up
over conventional inference process when the number of classes
(\emph{n\textsubscript{classes}}) is huge and GPU memory budget is
limited. The proposed method also has unique properties which
potentially can assist in tasks which require new class prediction.

The rest of the paper is organized as follows: Section~\ref{cc} outlines the
methodology of the new search method, Section~\ref{Exp} provides its
experimental verification, Section~\ref{discussions} discusses possibility of new class
prediction, and Section~\ref{conclusions} concludes the paper.

\section{Closest center vector search in preconfigured LS}
\label{cc}

\subsection{Vector systems for latent space configurations}
\label{Vn}

Conventionally NNs develop embeddings during training along with their
distribution in LS with number of dimensions
\emph{n\textsubscript{dim}}. However, researchers noticed that
properties of embedding distribution significantly affect the NN
performance~\cite{CosFace,CenterLoss}. They proposed different loss
functions that improve the separation of clusters of embeddings.
However, these methods do not guarantee that specific LS geometry will
be reached during training. It has recently been shown that this can be
achieved with LSC~\cite{lsc2, lsc1}: vector systems which have desired
properties can be used as configuration targets to ensure that LS has
these properties, too~\cite{lsc3}.

In this study we focus on specific family of vector systems with base
vectors consisting only of ones, negative ones, and zeroes. Such systems
are referred to as \(V_{n}^{D}\) with the first digit in \emph{D} being
the number of ones, the second being the number of negative ones, and
\emph{n} being the LS dimension (so
\emph{n}=\emph{n\textsubscript{dim}}). The choice of vector system index
\emph{D} and the dimension \emph{n} determines the number of vectors
\emph{n\textsubscript{vects }}in that system~\cite{lsc3}. Since these
vectors act as center vectors for clusters of class embeddings during
training, \emph{D} and \emph{n} determine the maximum possible number of
classes that NN can be trained to distinguish. Therefore, when
\emph{n\textsubscript{classes }}is known in advance,
\emph{n\textsubscript{dim}} should be chosen so that
\emph{n\textsubscript{vects}} \(\geq\) \emph{n\textsubscript{classes}}.
In case of \(V_{n}^{22}\) the number of vectors is

\begin{equation}
	n_{vects} = \frac{n (n - 1) (n - 2) (n - 3)}{4},
	\label{eq:n_vects}
\end{equation}
\unskip

where in context of NN LSC training \emph{n} =
\emph{n\textsubscript{dim}}. Hence, the maximum possible number of
classes \emph{n\textsubscript{classes}} = \emph{n\textsubscript{vects}}
\textasciitilde{} \emph{n\textsubscript{dim}}\textsuperscript{4}, or
\emph{n\textsubscript{dim}} \textasciitilde{} \(\sqrt[4]{n_{classes}}\).
This allows to accommodate a huge number of well-separated vectors in LS
of conventional NN models. It has also been shown in~\cite{lsc3} that
training with the minimum possible \emph{n\textsubscript{dim}} required
to accommodate the desired number of classes yields the best results.
This criterion is used to choose \emph{n\textsubscript{dim}} with (\ref{eq:n_vects})
in all experiments in Sections~\ref{Exp}.

\subsection{Vector system closest center search}
\label{fast-search}

\begin{algorithm}
	\caption{Finding the closest center in \(V_{n}^{mk}\) and label prediction (PyTorch-style pseudocode)}
	\label{alg:1}
	\begin{algorithmic}[1]
		\Require NN model, batch of inputs $x$, embeddings $z = \text{model}(x)$, 
		          center vectors $C$ of vector system $V_{n}^{mk}$, 
		          label dictionary $\{\text{labels}_i : C_i\}$ with $i = 0 \ldots n_{\text{classes}} - 1$
		
		\Statex \textbf{Preliminary step}
		\State $c_{\text{maxes}} \gets \text{sort}(\text{topk}(C, m, \text{largest=True}))$ \Comment{find indexes of $m$ largest values for vectors in $C$}
		\State $c_{\text{mins}} \gets \text{sort}(\text{topk}(C, k, \text{largest=False}))$ \Comment{find indexes of $k$ lowest values for vectors in $C$}
		\State $c_{\text{sorted}} \gets (c_{\text{maxes}}, c_{\text{mins}})$ \Comment{make a tuple of sorted maxes and sorted mins}
		\State $\text{center\_dict} \gets \{c_{\text{sorted}} : \text{labels}\}$ \Comment{replace center vectors $C_i$ in label dictionary with $c_{\text{sorted}}$ and rearrange}
		
		\Statex \textbf{Inference step}
		\State $z_{\text{maxes}} \gets \text{sort}(\text{topk}(z, m, \text{largest=True}))$ \Comment{find indexes of $m$ largest values for vectors in $z$}
		\State $z_{\text{mins}} \gets \text{sort}(\text{topk}(z, k, \text{largest=False}))$ \Comment{find indexes of $k$ lowest values for vectors in $z$}
		\State $z_{\text{idxs}} \gets (z_{\text{maxes}}, z_{\text{mins}})$ \Comment{make tuple of sorted maxes and sorted mins}
		\State $\text{predicted\_labels} \gets \text{center\_dict}[z_{\text{idxs}}] \text{ if } z_{\text{idxs}} \in \text{center\_dict} \text{ else } -1$ \Comment{find label using sorted indexes; if closest center not in center dict return -1}
		\State \Comment{(optional) reconstruct the closest center vector using $z_{\text{idxs}}$ or $C[\text{predicted\_labels}]$}
	\end{algorithmic}
\end{algorithm}

In conventional classifiers the predicted label is determined by the
output of the final fully connected layer. This layer performs matrix
multiplication of NN embeddings with weight matrix which size depends on
\emph{n\textsubscript{classes}}. This can become a problem when
\emph{n\textsubscript{classes}} is huge since NN inference is the
fastest if the entire weight matrix is stored on GPU. Alternatively, a
similarity search with some set of precomputed or predefined prototype
embedding vectors can be used~\cite{proto}. However, the complexity of
such search is still proportional to \emph{n\textsubscript{classes
}} which can make it extremely costly for large databases. In this paper
we mainly compare our proposed method with the cosine similarity
(\emph{cossim}) label prediction approach. However, our reasoning still
holds for conventional classification due to $\mathcal{O}(n\textsubscript{classes})$
complexity of the final matrix
multiplication operation.

In this paper we consider the task of finding the closest prototype
vector to the input vector (query). In context of LSC this also allows
to predict labels, since prototype vectors represent center vectors for
class embedding clusters~\cite{lsc2}. Finding the closest prototype
vector requires comparing the query vector with
\emph{n\textsubscript{classes}} prototype vectors. However, with the
predefined internal structure of vector system, this task can be
significantly simplified by taking advantage of vector system
properties.

It has previously been noted that $\mathcal{O}(1)$ complexity search for
closest center vectors is possible when root systems are used as targets
for LSC~\cite{Igor}. In this paper we propose simple and computationally
efficient search procedure that can be used upon specific family of
vector systems, as shown in Algorithm~\ref{alg:1}. These vector systems are
obtained as permutations of elements of base vectors with coordinates
consisting of ones, negative ones, and zeroes. It should be stressed
that for Algorithm~\ref{alg:1} to be applicable, NN should be trained with angular
metric (\emph{e.g.}, cosine loss), and its LS has to be configured with
\(V_{n}^{D}\) discussed below as a target, \emph{i.e.} training should
be performed using the methodology proposed in~\cite{lsc2}.

At heart of the proposed approach is the fact that when all center
vectors have exactly \emph{m} ones and \emph{k} negative ones (so
\(V_{n}^{D} = V_{n}^{mk}\)), one can find the closest center vector by
finding the indexes of \emph{m} largest and \emph{k} lowest values in
query's embedding. A center vector with ones in indexes of maximums and
negative ones in indexes of minimums will be the closest possible to
query's embedding. Thus, the closest center search complexity is reduced
to $\mathcal{O}(n\textsubscript{dim})$ requiring only two search
operations, which can be performed for batches of inputs using, for
instance, `topk' function in PyTorch~\cite{topk}. Since for all NN use
cases \emph{n\textsubscript{dim}} is a constant during inference, the
$\mathcal{O}(1)$ complexity search is achieved by taking advantage of the
known \(V_{n}^{D}\) structure used for NN training.

The proposed structure of \emph{center\_dict} in Algorithm~\ref{alg:1} ensures
$\mathcal{O}(1)$ hash map search for fast inference. However, this
implementation is only possible when \emph{center\_dict} can be stored in RAM,
and chunking may be required when data structures become too large. It
should also be noted that while the proposed method is not equivalent to
\emph{cossim}, its closest center prediction always coincides with the
\emph{cossim} prediction. This is proven by the following theorem.

\begin{theorem}
Let $V^{mk}_n$ be the set of all vectors in $\mathbb{R}^n$ having exactly $m$ entries equal to $1$, $k$ entries equal to $-1$, and $n-m-k \ge 0$ zero entries.
Let $W^{mk}_n$ be the set of all vectors $x \in \mathbb{R}^n$ such that, after rearranging the coordinates in the non-decreasing order,
\[
x_1 \le \dots \le x_n,
\]
one has
\[
x_k < x_{k+1}
\quad \text{and} \quad
x_{n-m} < x_{n-m+1}.
\]

Define a mapping $f : W^{mk}_n \to V^{mk}_n$ by assigning to each $w$ the vector $f(w)$ is obtained by placing $1$ at the positions of the $m$ largest coordinates of $w$, placing $-1$ at the positions of the $k$ smallest coordinates of $w$, and placing $0$ at all remaining positions. 

Let $w \in W^{mk}_n$,
\[
  \cossim(v,w) = \frac{\langle v,w \rangle}{\|v\|\|w\|}.
\]

Then
\[
\argmax_{v \in V^{mk}_n} \cossim (v,w) = f(w),
\]

\end{theorem}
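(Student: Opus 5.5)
Since $\|v\|=\sqrt{m+k}$ for every $v\in V^{mk}_n$, and $\|w\|$ is fixed, maximizing $\cossim(v,w)$ over $V^{mk}_n$ is the same as maximizing the inner product $\langle v,w\rangle$. Here we assume $m+k\ge 1$; the case $m=k=0$ is degenerate because then $V^{mk}_n=\{0\}$. We also need $w\neq 0$. This follows from the definition of $W^{mk}_n$: at least one of the strict inequalities $x_k<x_{k+1}$ or $x_{n-m}<x_{n-m+1}$ is nonvacuous, so $w$ has two distinct coordinates. When $k=0$ or $m=0$ we read the corresponding condition as vacuous.

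So the plan is to identify each $v\in V^{mk}_n$ with a pair of disjoint index sets $(P,N)$, where $|P|=m$ and $|N|=k$ are the positions of the $+1$ and $-1$ entries. Then
\[
\langle v,w\rangle=\sum_{i\in P}w_i-\sum_{i\in N}w_i .
\]

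The key step is to bound the two sums separately, forgetting the disjointness constraint. Let $x_1\le\dots\le x_n$ be the sorted coordinates of $w$.

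For any $m$-subset $P$ we have $\sum_{i\in P}w_i\le x_{n-m+1}+\dots+x_n$. Because of the strict gap $x_{n-m}<x_{n-m+1}$, the set of the $m$ largest coordinates is uniquely determined; call it $P^*$. Equality holds if and only if $P=P^*$: if $P\neq P^*$, some index of $P^*$ is replaced by an index whose value is at most $x_{n-m}$, which strictly decreases the sum.

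Symmetrically, $-\sum_{i\in N}w_i\le -(x_1+\dots+x_k)$. By the gap $x_k<x_{k+1}$, equality holds if and only if $N=N^*$, the uniquely determined set of the $k$ smallest coordinates.

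Next we check that the pair $(P^*,N^*)$ is admissible, i.e.\ $P^*\cap N^*=\emptyset$. Since $k\le n-m$, the bottom $k$ sorted positions and the top $m$ sorted positions are disjoint, and the strict gaps ensure the corresponding coordinate index sets are disjoint even with ties elsewhere. In particular $f$ is well defined, and $f(w)$ corresponds exactly to $(P^*,N^*)$. Hence
\[
\langle v,w\rangle\le \sum_{i>n-m}x_i-\sum_{i\le k}x_i=\langle f(w),w\rangle,
\]
with equality if and only if $v=f(w)$. Dividing by $\sqrt{m+k}\,\|w\|>0$ gives that $f(w)$ is the unique maximizer of $\cossim(\cdot,w)$, which is the claim.

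The only delicate point I expect is the bookkeeping around ties and the edge cases $m=0$, $k=0$, and $n=m+k$. This is where the hypotheses defining $W^{mk}_n$ are used, both for uniqueness of the argmax and for well-definedness of $f$. I would handle it by the exchange argument above, phrased in terms of the sorted values rather than the original positions.
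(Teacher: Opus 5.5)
Your proof is correct and is essentially the paper's argument. Both reduce to maximizing $\langle v,w\rangle$ and bound the $m$ coordinates taken with sign $+$ and the $k$ taken with sign $-$ by the $m$ largest and $k$ smallest coordinates; the paper does this with a termwise order-statistic comparison inside a proof by contradiction, and your version is slightly more complete because your strict-gap exchange step gives uniqueness of the maximizer and disjointness of $P^*,N^*$, which the paper leaves implicit (its argument only shows $f(w)$ is \emph{a} maximizer). The one slip is that $w\neq 0$ does not follow from $W^{mk}_n$ when $(m,k)=(n,0)$ or $(0,n)$, since both gap conditions are then vacuous; there $V^{mk}_n$ is a singleton, so this only reflects that $\cossim(\cdot,0)$ is undefined, a defect of the statement rather than of your argument.
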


\begin{proof}
Let $x_1,\dots,x_m$ be the $m$ largest coordinates of $w$, arranged in the nonincreasing order, and let $y_1,\dots,y_k$ be the $k$ smallest coordinates of $w$, arranged in the non-decreasing order.

Let $v=f(w)$. Observe that for every $v \in V^{mk}_n$ one has
\[
\|v\| = \sqrt{m+k}.
\]

Suppose that there exists $v' \in V^{mk}_n$ such that
\[
\cossim(v,w) < \cossim(v',w).
\]
Since $\|v\|=\|v'\|$, this implies
\[
\langle v,w\rangle < \langle v',w\rangle.
\]

The scalar product $\langle v',w\rangle$ is the sum of $m$ coordinates of $w$ taken with the sign $+$ and $k$ coordinates taken with the sign $-$. Thus
\[
\langle v',w\rangle = z_1+ \dots + z_m - t_1- \dots - t_k,
\]
where $z_i$ are some $m$ coordinates of $w$ arranged in the nonincreasing order and $t_i$ are some $k$ coordinates of $w$ arranged in the nondecreasing order.

Hence
\[
x_1 + \dots + x_m - y_1- \dots - y_k 
<
z_1 + \dots + z_m - t_1 - \dots - t_k,
\]
which implies
\[
\sum_{i=1}^m (x_i - z_i)
+
\sum_{j=1}^k (t_j - y_j)
< 0.
\]

However, each term in this sum is nonnegative by construction. This contradiction completes the proof.
\end{proof}

\subsection{Handling unlabeled center vectors}
\label{unlabeled_centers}

As mentioned in Section~\ref{Vn}, the number of vectors
\emph{n\textsubscript{vects}} in \(V_{n}^{D}\) is determined by
\emph{n\textsubscript{dim}} and \emph{D}. Then
\emph{n\textsubscript{classes}} vectors are chosen as targets for class
center vectors during NN training (hereafter called labeled centers), so
\emph{n\textsubscript{classes}} \(\leq\) \emph{n\textsubscript{vects}}.
Thus, in most cases there are several vectors present in \(V_{n}^{D}\)
which do not correspond to any class (hereafter called unlabeled
centers)

\begin{equation}
	n_{uc} = n_{vects} - n_{classes},
	\label{eq:n_uc}
\end{equation}
\unskip

\begin{equation}
	k_{l} = \frac{n_{classes}}{n_{vects}},
	\label{eq:k_l}
\end{equation}
\unskip

The proposed approach returns the closest possible vector in
\(V_{n}^{D}\), whether it is labeled or not. Returning unlabeled centers
is only possible when the label coefficient (\ref{eq:k_l}) is less than one.
One can see that \emph{k\textsubscript{l}} is the closest to one when
the minimum possible \emph{n\textsubscript{dim}} for desired
\emph{n\textsubscript{classes}} is chosen using (\ref{eq:n_vects}). However, when
\emph{n\textsubscript{dim}} is large relative to
\emph{n\textsubscript{classes}}, \emph{k\textsubscript{l}} gets smaller,
indicating that more unlabeled centers are present in LS. While a
special case of the unlabeled centers usage is discussed in Section~\ref{discussions},
Algorithm~\ref{alg:unlabeled} provides a procedure for finding the closest labeled center
to return its label in the conventional label prediction setting.

Let $U \subset V^{mk}_n$. To find $\argmax_{v \in U} \cossim(v,w)$,
where $w$ is such that the $\argmax$ function is well-defined (i.e., the maximum element is unique), one can use the following approach.

Let $x = (x_1, \dots, x_n)$ be the vector obtained from $w$ by arranging its coordinates such that $x_i \le x_{i+1}$. Then $\sigma w = x$, where $\sigma$ is the permutation that sorts the coordinates accordingly.

For each vector $v \in V^{mk}_n$, we define a basic permutation of its coordinates to be a permutation that either swaps -1 with the preceding entry (if it exists) or swaps 1 with the following entry (if it exists), provided that the resulting vector differs from the original one.

Let $G$ be directed graph without loops whose vertex set is the set $V^{mk}_n$. There is directed edge from $v_1$ to $v_2$ if and only if $v_2$ is obtained from $v_1$ by basic permutation of coordinates.

Define a vector $x' = (x'_1, \dots, x'_n) \in \mathbb{R}^n$ by
\[
  x'_i = x_i - x_{i+1}.
\]
Note that $x'_i \le 0$.

We now define weighted graph $G_x$, obtained from $G$ by assigning weights to its edges as follows. If $v_1$ and $v_2$ differ by swapping the coordinates with indices $i$ and $i+1$, then the weight of the edge equals $x'_i = \langle x, v_1 - v_2 \rangle$ in the case of a swap of 1 or -1 with 0, and equals $2x'_i = \langle x, v_1 - v_2 \rangle$ in the case of a swap of -1 and 1.

\begin{lemma}
  The weighted directed graph $G_x$ posesses the following properties:
  \begin{enumerate}
  \item All edge weights are nonnegative.
  \item Every vertex $v \neq (-1, \dots, -1, 0, \dots, 0, 1, \dots, 1)$ has at least one outgoing edge.
    \item For any vertices $v_1$ and $v_2$, the sum of weights of any directed path from $v_1$ to $v_2$ is independent of the choice of the path and equal to $\langle x, v_1 - v_2 \rangle$.
  \end{enumerate}
\end{lemma}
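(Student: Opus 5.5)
The plan is to verify the three properties directly from the definition of a basic permutation and of the weights. First I compute $v_1-v_2$ for each kind of edge, since everything reduces to this.

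\textbf{Edge computation.} Suppose $v_2$ is obtained from $v_1$ by swapping coordinates $i$ and $i+1$. Then $v_1-v_2=(v_{1,i}-v_{1,i+1})(e_i-e_{i+1})$. Hence
\[
\langle x,v_1-v_2\rangle=(v_{1,i}-v_{1,i+1})\,x'_i .
\]
The scalar factor is $\pm1$ for a swap of $\pm1$ with $0$, and $\pm2$ for a swap of $-1$ with $1$. This confirms that the weight formula in the definition of $G_x$ agrees with $\langle x,v_1-v_2\rangle$ on every edge. I will use this identity in all three parts.

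\textbf{Property 1.} I go through the two admissible moves. In the first, $-1$ is moved one step toward the front, where the coordinates of $x$ are small. In the second, $1$ is moved one step toward the back, where the coordinates of $x$ are large. In both cases the sign of $v_{1,i}-v_{1,i+1}$ is fixed by the type of move, because the moved $-1$ or $1$ always travels in the same direction. Consequently every weight is a fixed-sign multiple (by $1$ or $2$) of $x'_i$, and $x'_i\le0$ holds by the sortedness of $x$. The claimed sign of the weights should follow from this case check.

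I expect this sign bookkeeping to be the main obstacle. The sign of $\langle x,v_1-v_2\rangle$ depends delicately on the orientation convention, so I will check it explicitly on the $n=2$ example $v_1=(0,-1)$, $v_2=(-1,0)$ before writing the general case.

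\textbf{Property 2.} I argue by contraposition. Suppose $v$ has no outgoing edge. Then no $-1$ can move forward, so every $-1$ is either at position $1$ or immediately preceded by another $-1$. It follows that the $-1$'s occupy exactly positions $1,\dots,k$. Symmetrically, no $1$ can move backward, so every $1$ is either at position $n$ or immediately followed by another $1$. It follows that the $1$'s occupy positions $n-m+1,\dots,n$. This forces $v=(-1,\dots,-1,0,\dots,0,1,\dots,1)$.

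\textbf{Property 3.} I take any directed path $v_1=u_0\to u_1\to\dots\to u_r=v_2$. By the edge computation, the weight of the edge $u_{j-1}\to u_j$ equals $\langle x,u_{j-1}-u_j\rangle$. Summing over $j=1,\dots,r$, the sum telescopes to $\langle x,u_0-u_r\rangle=\langle x,v_1-v_2\rangle$. This value does not depend on the path chosen, which proves the claim.
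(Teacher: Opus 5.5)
\textbf{Items 2 and 3.} These are fine. Your telescoping sum is the paper's ``linearity'' argument. Your contrapositive for item 2 (the $-1$'s must form a prefix and the $1$'s a suffix) is a more careful version of the paper's one-line remark.

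\textbf{Item 1 is a genuine gap.} Its entire content is a sign, and you never determine that sign. You note that $v_{1,i}-v_{1,i+1}$ has a fixed sign, write the factor as $\pm1$ or $\pm2$, and then postpone the check to an example. Your edge computation has the same issue: the claim that the weight agrees with $x'_i$ (resp.\ $2x'_i$) silently requires that factor to be $+1$ (resp.\ $+2$).

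\textbf{Carrying out the check refutes item 1 as written.} On every edge, the entry that ends up at position $i+1$ is the larger one. It is either a $0$ or $1$ displaced by a $-1$ moving left, or a $1$ moving right past a $0$ or $-1$. Hence $v_{1,i}-v_{1,i+1}=c\in\{1,2\}$, and the weight is $\langle x,v_1-v_2\rangle=c\,x'_i\le 0$. Your own example gives $\langle x,(0,-1)-(-1,0)\rangle=x_1-x_2$, which is strictly negative when $x_1<x_2$. Your own heuristic already says this: pushing $-1$ toward small coordinates and $1$ toward large ones increases $\langle x,v\rangle$ along every edge.

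So with the definitions as given, the weights are nonpositive. Items 1 and 3 cannot both hold as soon as $G$ has an edge swapping positions $i,i+1$ with $x_i<x_{i+1}$, since item 3 applied to that one-edge path forces the weight to equal $c\,x'_i<0$. The paper's own proof of item 1 (``follows from $x'_i\le 0$'') likewise establishes nonpositivity, not nonnegativity. The sign error is in the statement, and no case check will prove item 1 as stated.

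\textbf{What to do.} Commit to one corrected version:
\begin{itemize}
\item Prove that all edge weights are \emph{nonpositive}, leaving item 3 unchanged.
\item Or, if you want nonnegative weights (as the later ``path metric'' language suggests), define the weight as $-c\,x'_i=\langle x,v_2-v_1\rangle\ge 0$. Item 3 must then read $\langle x,v_2-v_1\rangle$.
\end{itemize}
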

\begin{proof}
  \begin{enumerate}
  \item This follows immediately from the fact that $x'_i \le 0$ for all $i$.
  \item Indeed, any vertex different from $(-1, \dots, -1, 0, \dots, 0, 1, \dots, 1)$ admits either a shift of at least one occurrence of -1 to the left or of at least one occurrence of 1 to the right, which produces an outgoing edge.
    \item This follows from the definition of the edge weights together with the linearity of the scalar product in the second argument.
    \end{enumerate}
 \end{proof}

 Thus, the problem of finding a vector $\argmax_{v \in U} \cossim(v, w)$ reduces to finding
 \[
   v^* = \sigma^{-1} \hat{v}^*,
 \]
 where $\hat{v}^* \in \sigma U$ is a vertex closest to $(1, \dots, 1, 0, \dots, 0, -1, \dots, -1)$ with respect to the path metric on $G_{\sigma w}$, that is, minimizing the total weight of a path connecting these vertices.

 Based on this lemma, the search for the optimal vector in a constrained subset $U$ can be formulated as follows.

 \begin{algorithm}[H]
   \caption{Finding the closest labeled center.}
   \label{alg:unlabeled}
   \begin{algorithmic}[1]
     \Require Vector $w \in \mathbb{R}^n$, constraint subset $U \subset V^{mk}_n$.
     \Ensure $v^* = \argmax_{v \in U} \cossim(v,w)$.
     \State $\sigma \gets$ permutation such that coordinates of $\sigma w = x$ are non-decreasing.
     \State $\hat{U} \gets \sigma U$.
     \State $S \gets \varnothing$. \Comment{set of candidate vertices}
     \State $v \gets (1, \dots, 1, 0, \dots, 0, -1, \dots, -1)$.

     \Procedure{DFS}{current, $S$}
     \If{current $\in \hat{U}$ or current $= -(1, \dots, -1, 0, \dots, 0, 1, \dots, 1)$}
     \State $S \gets S \cup \{ \text{current} \}$.
     \Else
     \For{vertex $\gets$ each outgoing edge from current in $G$}
     \State DFS(vertex, S)
     \EndFor
     \EndIf
     \EndProcedure
     
     \State DFS($v$, $S$)
     \State $\hat{v}^* \gets \argmax_{v \in S} \cossim(v, w)$.
     \State $v^* \gets \sigma^{-1} \hat{v}^*$.
   \end{algorithmic}
  \end{algorithm}

\subsection{Comparison with the advanced similarity search algorithms}
\label{advcossim}

Cosine similarity search is the primary method for similarity search in
NN applications and databases with its $\mathcal{O}(n)$ complexity being its
well-known drawback. Various methods which can significantly reduce its
complexity have been proposed over last decades. These methods usually
involve combining cosine similarity search with other techniques which
include database clustering, an approximate \emph{k}-means nearest
neighbor search, vector quantization techniques, etc.~\cite{pLHC,BSG,FSS,PQ}. 
Unfortunately, most of them trade accuracy for speed while still
being very computationally intensive. Furthermore, these algorithms
still have $\mathcal{O}(n\textsuperscript{1/b})$ complexity, where \emph{b}
is method-dependent, and in most cases \(1 \leq b \leq 2\). Hence, while
indeed being faster than the original similarity search, their
complexity remains proportional to \emph{n}. 

\section{Experiments}
\label{Exp}

\begin{table}
	\caption{The comparison of training accuracy calculation results with
		\emph{cossim} and the proposed method.} 
	\label{tab:31}
	\centering
	\begin{tabular}{|c|c|c|c|c|c|}
	  \hline
	  exp. & \emph{n\textsubscript{dim}} & \emph{n\textsubscript{classes}}
	  & Dataset size & Acc. cossim, \% & Acc. new, \% \\ \hline
		1. & 10 & 1000 & 1281k & 94.5 & 94.5 \\
		2. & 23 & 50k & 50k & 98.3 & 98.3 \\
		3. & 48 & 300k & 300k & 96.2 & 96.2 \\
		\hline
	\end{tabular}.
\end{table}

The experimental procedure in this study follows the one described in
Section 5.3 in~\cite{lsc2}. That is, the datasets used for NN inference
were created from ImageNet-1K (i1k) by assigning a unique class to every
image. The datasets with over 1 million classes were obtained using
multiple image copies with different labels. However, only 1 million
images from each dataset were used in experiments in Section~\ref{exp_speed} to
remove the dependence of computational time on dataset size. All
experiments use the same preprocessing procedure which includes input
image resizing, center crop, conversion to PyTorch tensors, and
application of conventional i1k normalization.

Small visual transformer (ViT-S)~\cite{VT} NN models with \(V_{n}^{22}\)
LS configuration are used for all experiments. The experiments are
conducted using Intel(R) Xeon(R) Gold 6230R CPU and a single 40GB NVIDIA
A100 GPU. All PyTorch CPU operations that can be run in parallel are
limited to two CPU threads. There are two types of experiments: with a
fixed batch size of 256, and the largest batch size that can fit in 40GB of GPU memory.

The main purpose of the conducted experiments is to compare the time
required to obtain the predicted label for query images with
conventional cosine similarity and Algorithm~\ref{alg:1}. In order to avoid
ambiguity regarding whether or not the preprocessing time should be
included in model inference time, we explicitly refer to all preliminary
operations including data loading and preprocessing as
\emph{t\textsubscript{d}}, and to the time required for NN model
computations as forward time \emph{t\textsubscript{f}}. We also refer to
the time required to obtain label from NN embedding with \emph{cossim}
and the new method as \emph{t\textsubscript{c}} and
\emph{t\textsubscript{n}}, respectively. Measured time intervals are
used to compute the search and the total acceleration coefficients as

\begin{equation}
	K_{s} = \frac{t_{c}}{t_{n}},
	\label{eq:K_s}
\end{equation}
\unskip

\begin{equation}
	K_{t} = \frac{t_{d} + t_{f} + t_{c}}{t_{d} + t_{f} + t_{n}},
	\label{eq:K_t}
\end{equation}
\unskip

\subsection{Accuracy comparison of cossim and the proposed method}
\label{exp_acc}

Table~\ref{tab:31} shows training dataset classification accuracy calculation
results on i1k and two datasets with large
\emph{n\textsubscript{classes}}. The accuracy is computed as a ratio of
correct predictions to all predictions (dataset size). Table~\ref{tab:31} shows
the exact agreement between the results obtained with \emph{cossim} and
the new method.

\subsection{Large n\textsubscript{classes} dataset processing speed comparison}
\label{exp_speed}

\begin{table}
	\caption{The time required by different computational stages of
		model inference on 1m images with \emph{cossim} and the proposed method
		with maximum possible batch size for 40GB GPU memory.} 
	\label{tab:321}
	\centering
	\begin{tabular}{|c|c|c|c|c|c|c|c|c|c|}
	  \hline
	  exp. & \emph{n\textsubscript{classes}} & \emph{n\textsubscript{dim}}
	  & Batch size & \emph{t\textsubscript{d}}, s & \emph{t\textsubscript{f}}, s
	  & \emph{t\textsubscript{c}}, s & \emph{t\textsubscript{n}}, s & \emph{K\textsubscript{s}} 
	  & \emph{K\textsubscript{t}} \\ \hline
		1 & 1m & 47 & 3584 & 4536 & 6 & 20 & 1.6 & 12 & 1.003 \\
		2 & 5m & 69 & 2560 & 4439 & 4 & 175 & 2.3 & 76 & 1.04 \\
		3 & 10m & 82 & 1024 & 4325 & 8.6 & 805 & 3.4 & 238 & 1.18 \\
		4 & 20m & 97 & 256 & 4444 & 33 & 7350 & 4 & 1836 & 2.64 \\
		5 & 30m & 107 & 64 & 4334 & 126 & 47244 & 8.2 & 6050 & 11.57 \\
		\hline
	\end{tabular}
\end{table}

\begin{table}
	\caption{The time required by different computational stages of
		model inference on 1m images with \emph{cossim} and the proposed method
		with fixed batch size of 256. Dash symbol indicates that \emph{cossim}
		can no longer be used.} 
	\label{tab:322}
	\centering
	\begin{tabular}{|c|c|c|c|c|c|c|c|c|c|}
	  \hline
	  exp. & \emph{n\textsubscript{classes}} & \emph{n\textsubscript{dim}}
	  & Batch size & \emph{t\textsubscript{d}}, s & \emph{t\textsubscript{f}}, s
	  & \emph{t\textsubscript{c}}, s & \emph{t\textsubscript{n}}, s & \emph{K\textsubscript{s}} 
	  & \emph{K\textsubscript{t}} \\ \hline
		1 & 1m & 47 & 256 & 4411 & 33 & 97 & 3.1 & 31 & 1.02 \\
		2 & 5m & 69 & 256 & 4235 & 33 & 1258 & 3.1 & 403 & 1.29 \\
		3 & 10m & 82 & 256 & 4232 & 33 & 1825 & 3.7 & 485 & 1.43 \\
		4 & 20m & 97 & 256 & 4444 & 33 & 7350 & 4 & 1836 & 2.64 \\
		5 & 30m & 107 & 256 & 4421 & 33 & - & 3.7 & - & - \\
		6 & 50m & 121 & 256 & 4290 & 31 & - & 3.6 & - & - \\
		7 & 100m & 143 & 256 & 4347 & 32.5 & - & 3.8 & - & - \\
		\hline
	\end{tabular}
\end{table}

The main experimental results of this study are summarized in Tables~\ref{tab:321}
and~\ref{tab:322}. Table~\ref{tab:321} shows that the overall speed-up is
negligible when large GPU batches are used even for rather large 1-5
million class datasets. Whereas in these cases \emph{K\textsubscript{s}}
is already considerable, \emph{K\textsubscript{t}} is small because
other operations take significantly more time than search operations.
However, \emph{K\textsubscript{t}} starts to grow as
\emph{n\textsubscript{classes}} increases and batch size reduces due to
GPU memory limitations. For the 20 million classes experiment, the batch
size can no longer be increased over the baseline size of 256, which
being combined with the growing cost of \emph{cossim} operations results
in a 2.64 times speedup when changing to the proposed method.
Furthermore, \emph{K\textsubscript{t}}reaches the astounding value of
11.6 in the 30 million classes experiment when the batch size has to be
further reduced to 64.

Table~\ref{tab:322} shows that with fixed batch size, the advantage of using the
proposed method becomes apparent even at 5 million classes. Furthermore,
it illustrates the possibility of conducting extremely large
\emph{n\textsubscript{classes}} inference since GPU memory requirements
do not depend on \emph{n\textsubscript{classes}}. This allows conducting
experiments in cases beyond the \emph{cossim} capabilities, as shown by
50 and 100 million classes experiments. These observations are
illustrated by Figure~\ref{fig:32}.

It should be noted that in Table~\ref{tab:321} \emph{t\textsubscript{n}}
increases between experiments mainly due to batch size reduction, while
in Table~\ref{tab:322} \emph{t\textsubscript{n }}slightly varies due to changes
in \emph{n\textsubscript{dim}}. Tables~\ref{tab:321} and~\ref{tab:322} also show that in
these experiments the most time-consuming operations are data loading
and preprocessing. Hence, the advantage of using the proposed method
increases in cases where \emph{t\textsubscript{d}} can be decreased by
reducing image size, using faster preprocessing, etc. Equations (\ref{eq:K_s})
and (\ref{eq:K_t}) show that \emph{K\textsubscript{t}} tends towards
\emph{K\textsubscript{s}} as \emph{t\textsubscript{d}} and
\emph{t\textsubscript{f}} become smaller, and \emph{K\textsubscript{s}}
can reach tremendous values, as shown in Tables~\ref{tab:321} and~\ref{tab:322}. This
illustrates significant potential of using the proposed method combined
with fast I/O and small NN models.

\begin{figure}
	\centering
	\includegraphics[scale=0.45]{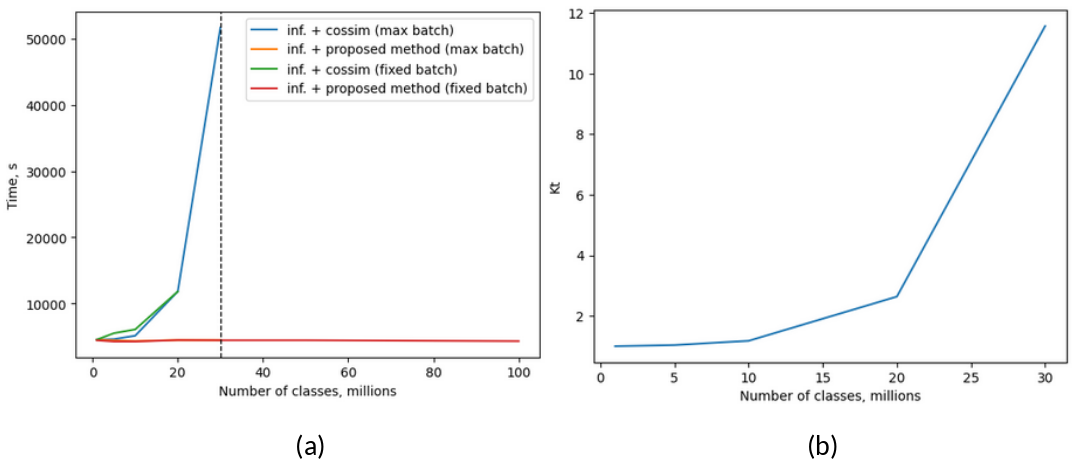} 
	\caption{(a) The total time required to obtain labels for 1m images
		depending on \emph{n\textsubscript{classes}} with \emph{cossim} and the
		proposed method (dashed line indicates where batch size becomes too
		small for \emph{cossim} to be applicable), and (b) the total
		acceleration coefficient \emph{K\textsubscript{t}} as function of
		\emph{n\textsubscript{classes }}(see Table~\ref{tab:321}).}
	\label{fig:32}
\end{figure}
\unskip

\section{Discussions}
\label{discussions}

\subsection{Closest vector search in configurations obtained by
combining multiple vector systems}
\label{combVn}

When NN embeddings are normalized, all possible embedding vectors form
\emph{n}-dimensional hypersphere. However, \(V_{n}^{D}\) vectors are
located on a (\emph{n-1})-dimensional hyperplane within this hypersphere
(see Section 3.2.1 in~\cite{lsc2} and Section 2.1 in~\cite{lsc3} for
details). Therefore, vectors of single \(V_{n}^{D}\) can have a
relatively poor hypersphere occupation. Whereas this is not necessarily
a disadvantage due to the tendency of NNs to form low-dimensional
manifold solutions in \emph{n} dimensions~\cite{iD}, there might be
applications that require a better hypersphere occupation than single
\(V_{n}^{D}\) can provide.

It has previously been proposed in~\cite{lsc2} that in case of
\emph{A\textsubscript{n}}, \emph{n}-dimensional vectors with better
hypersphere occupation can be obtained by projecting
\emph{(n+1)}-dimensional vectors of the original root system back to
\emph{n} dimensions. This adds some degree of non-uniformity to the
vector distribution, which has also been found to be beneficial for NN
training in case of \emph{A\textsubscript{n}} configuration. The same
operation can be applied to any \(V_{n}^{D}\) system. Resulting systems,
denoted \emph{p} for projected, can be expressed as combinations of
original systems

\begin{equation}
	V_{np}^{mk} = \pi_{n}\left( V_{n + 1}^{mk} \right) = V_{n}^{mk} \cup V_{n}^{\acute{m}k} \cup V_{n}^{m\acute{k}},
	\label{eq:V_proj}
\end{equation}
\unskip

where \(\pi_{n}\) is a projection operator which drops the
\emph{(n+1)\textsuperscript{th}} coordinate of all vectors in
\(V_{n + 1}^{mk},\) \(\acute{m} = m - 1\) and \(\acute{k} = k - 1\). One
can see that searching for the closest vector in this case can be done
by separately searching each \(V_{n}^{D}\) and then comparing the
obtained center vectors. Whereas this is longer than performing a search
in single \(V_{n}^{D}\), the combined system search complexity is still
$\mathcal{O}(1)$ since it does not depend on \emph{n\textsubscript{classes}}.
This example also illustrates that different LS configurations can have
different properties, and recommendations for choosing LS configurations
for specific tasks will be formulated in the future.

\subsection{Using unlabeled centers for new class prediction}
\label{newclasspred}

\begin{figure}
	\centering
	\includegraphics[scale=0.5]{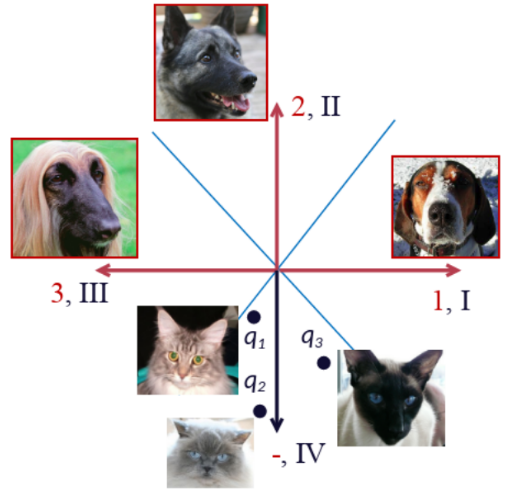} 
	\caption{A ``toy'' example of the dog-only detection system receiving
		cat queries: red arrows represent labeled centers, black arrow
		represents the unlabeled center, red digits represent data labels, black
		roman numerals represent center indexes, dog images are example classes
		1-3, cat images correspond to example queries
		\emph{q\textsubscript{1}}-\emph{q\textsubscript{3}}. Blue lines
		correspond to decision boundaries, or angles where closest centers
		change.}
	\label{fig:42}
\end{figure}
\unskip

It has previously been discussed in Section~\ref{unlabeled_centers} that the proposed method
can return an unlabeled center and how the next closest labeled center
can be found to predict the label. However, the ability to return
unlabeled centers in Algorithm~\ref{alg:1} is a feature which can have other uses.

Finding multiple samples belonging to the same unlabeled center can
imply that there is a class previously unseen by NN during training.
This point is illustrated by a toy example in Figure~\ref{fig:42}. Typical NNs
with classification layers are bound to return a label from
\emph{n\textsubscript{classes}} set for any, however absurd, input. This
is also true for systems which check against
\emph{n\textsubscript{classes}} prototypes. This means that all queries
\emph{q}, while being closer to the unlabeled center IV, will receive
either label 1 of center I, or label 3 of center III. However, the
proposed method can help to identify the anomaly in the stream of inputs
and suggest that such queries are evidence of the presence of a new
class at the unlabeled center IV (which, for this example, are cats).
Hence, this can act as a message for human operator that there is
something unexpected about queries rather than returning classes from a
predefined set of options.

This fact, combined with the possibility to easily change the number of
classes with LSC during training and inference (see Section 5.1.1 in~\cite{lsc2}),
can be applied to areas where the number of classes is
inherently not constant. It was previously suggested that lifelong
learning and few-shot learning were promising application areas for LSC~\cite{lsc2}.
This natural capability to identify new classes by analyzing
inference output reinforces the usefulness of proposed method even
further. However, it should be stressed that the capability to identify
new classes does not guarantee this behavior, and further research must
be conducted to verify these assumptions. This topic will be studied in
greater detail in the future.

\section{Conclusions}
\label{conclusions}

This paper verifies the possibility to conduct $\mathcal{O}(1)$ complexity
closest cluster center search for NN embeddings in a vector system used
for LSC. This in turn leads to very fast and computationally efficient
NN label prediction. Experimental results show the exact agreement in
accuracy between the proposed and \emph{cossim} methods. The time
measurement experiments show that the advantage of using the proposed
method becomes notable when the number of classes exceeds 10 million,
especially when GPU memory is limited. In latter scenario the proposed
method allows to achieve up to 11.6 times acceleration due to the
forcibly reduced batch size and the overall growth in computational cost
of label prediction operation in conventional methods. The proposed
method also allows to conduct inference when the number of classes is
extremely large and beyond the conventional methods' capabilities, as
illustrated by 100 million classes experiment. An additional capability
to return information about presence of unlabeled nearest vector, which
can help to identify new classes, can be very promising for LSC
application in lifelong and few-shot learning.

\section*{Acknowledgement}
\label{acknowledgement}

The authors would like to thank their colleagues Dr Igor Netay and Dr
Anton Raskovalov for fruitful discussions, and Vasily Dolmatov for
discussions and project supervision.


\bibliographystyle{IEEEtran}
\bibliography{IEEEabrv,ms}